\documentclass[11pt]{article}

\usepackage[margin=1.5in]{geometry}
\usepackage[square, numbers, sort]{natbib}
\usepackage{graphicx}
\usepackage{scalerel}
\usepackage{mathrsfs}
\usepackage{amsthm}
\usepackage{faktor}
\usepackage{amsmath}
\usepackage{amssymb}
\usepackage{amsfonts}
\usepackage{mathtools}
\usepackage{diagbox}
\usepackage{multirow}
\usepackage{color}
\usepackage{bm}
\usepackage[utf8]{inputenc} 
\usepackage[T1]{fontenc}    
\usepackage{hyperref}       
\usepackage{url}            
\usepackage{booktabs}       
\usepackage{nicefrac}       
\usepackage{microtype}      

\bibliographystyle{abbrvnat}

\DeclareMathOperator*{\argmin}{arg\,min}
\DeclareMathOperator*{\interior}{int}
\DeclareMathOperator*{\dom}{dom}

\newcommand{\OPT}{\textsc{opt}}
\newcommand{\bydef}{\stackrel{\bigtriangleup}{=}}

\newcommand{\bbE}{\mathbb{E}}

\newcommand{\Tpre}{\Theta_{\text{pre}}}
\newcommand{\tpre}{\theta_{\text{pre}}}
\newcommand{\tmn}{\theta^{\mu}_{\nu}}
\newcommand{\tmm}{\theta^{\mu}_{\mu}}

\newcommand{\tmsm}{\theta^{\mu^*}_{\mu}}

\newcommand{\mt}{\,\,\,\longmapsto\,\,\,}
\newcommand{\lt}{\,\,\,\longrightarrow\,\,\,}
\newcommand{\Real}{\mathbb{R}}
\newcommand{\norm}[1]{\left\lVert #1 \right\rVert}
\newcommand{\abs}[1]{\lvert#1\rvert}
\newcommand{\st}[1]{\!_{\text{#1}}}

\newcommand{\radon}{rca}
\newcommand{\Va}{\mathcal{F}}
\newcommand{\prob}{\mathcal{P}}
\newcommand{\compcent}[1]{\vcenter{\hbox{$#1\circ$}}}
\newcommand{\comp}{\mathbin{\mathchoice
  {\compcent\scriptstyle}{\compcent\scriptstyle}
  {\compcent\scriptscriptstyle}{\compcent\scriptscriptstyle}}}

\newtheorem{theorem}{Theorem}

\newtheorem{lemma}[theorem]{Lemma}

\newtheorem{definition}[theorem]{Definition}

\newtheorem{corollary}[theorem]{Corollary}

\newcommand{\sectionref}[1]{Section \ref{#1}}

\newcommand{\definitionref}[1]{Definition \ref{#1}}

\newcommand{\corollaryref}[1]{Corollary \ref{#1}}

\newcommand{\lemmaref}[1]{Lemma \ref{#1}}

\newcommand{\theoremref}[1]{Theorem \ref{#1}}

\title{Approximation and Convergence Properties of Generative Adversarial Learning}
\author{
  Shuang Liu\\
  University of California, San Diego\\
  \texttt{shuangliu@ucsd.edu}
  \and
  Olivier Bousquet\\
  Google Brain\\
  \texttt{obousquet@google.com}
  \and
  Kamalika Chaudhuri\\
  University of California, San Diego\\
  \texttt{kamalika@cs.ucsd.edu}
}

\begin{document}
\maketitle
\begin{abstract}
Generative adversarial networks (GAN) approximate a target data distribution by jointly optimizing an objective function through a "two-player game" between a generator and a discriminator.  Despite their empirical success, however, two very basic questions on how well they can approximate the target distribution remain unanswered. First, it is not known how restricting the discriminator family affects the approximation quality. Second, while a number of different objective functions have been proposed, we do not understand when convergence to the global minima of the objective function leads to convergence to the target distribution under various notions of distributional convergence. 

In this paper, we address these questions in a broad and unified setting by defining a notion of adversarial divergences that includes a number of recently proposed objective functions. We show that if the objective function is an adversarial divergence with some additional conditions, then using a restricted discriminator family has a moment-matching effect. Additionally, we show that for objective functions that are strict adversarial divergences, convergence in the objective function implies weak convergence, thus generalizing previous results. 
\end{abstract}

\section{Introduction}
Generative adversarial networks (GANs) have attracted an enormous amount of recent attention in machine learning. In a generative adversarial network, the goal is to produce an approximation to a target data distribution $\mu$ from which only samples are available. This is done iteratively via two components -- a generator and a discriminator, which are usually implemented by neural networks. The generator takes in random (usually Gaussian or uniform) noise  as input and attempts to transform it to match the target distribution $\mu$; the discriminator aims to accurately discriminate between samples from the target distribution and those produced by the generator. Estimation proceeds by iteratively refining the generator and the discriminator to optimize an objective function until the target distribution is indistinguishable from the distribution induced by the generator. The practical success of GANs has led to a large volume of recent literature on variants which have many desirable properties; examples are the f-GAN~\cite{NowozinCT16}, the MMD-GAN~\cite{li2015generative, DziugaiteRG15}, the Wasserstein-GAN~\cite{ArjovskyCB17}, among many others. 

In spite of their enormous practical success, unlike more traditional methods such as maximum likelihood inference, GANs are theoretically rather poorly-understood. In particular, two very basic questions on how well they can {\em{approximate}} the target distribution $\mu$, even in the presence of a very large number of samples and perfect optimization, remain largely unanswered. The first relates to the role of the discriminator in the quality of the approximation. In practice, the discriminator is usually restricted to belong to some family, and it is not understood in what sense this restriction affects the distribution output by the generator. The second question relates to convergence; different variants of GANs have been proposed that involve different objective functions (to be optimized by the generator and the discriminator). However, it is not understood under what conditions minimizing the objective function leads to a good approximation of the target distribution. More precisely, does a sequence of distributions output by the generator that converges to the global minimum under the objective function always converge to the target distribution $\mu$ under some standard notion of distributional convergence? 

In this work, we consider these two questions in a broad setting. We first characterize a very general class of objective functions that we call {\em{adversarial divergences}}, and we show that they capture the objective functions used by a variety of existing procedures that include the original GAN~\cite{goodfellow2014generative}, f-GAN~\cite{NowozinCT16}, MMD-GAN~\cite{DziugaiteRG15, li2015generative}, WGAN~\cite{ArjovskyCB17}, improved WGAN~\cite{GulrajaniAADC17}, as well as a class of entropic regularized optimal transport problems~\cite{GenevayCPB16}. We then define the class of {\em{strict adversarial divergences}} -- a subclass of adversarial divergences where the minimizer of the objective function is uniquely the target distribution. This characterization allows us to address the two questions above in a unified setting, and translate the results to an entire class of GANs with little effort. 

First, we address the role of the discriminator in the approximation in \sectionref{RFM}. We show that if the objective function is an adversarial divergence that obeys certain conditions, then using a restricted class of discriminators has the effect of matching {\em{generalized moments}}. A concrete consequence of this result is that in linear f-GANs, where the discriminator family is the set of all affine functions over a vector $\psi$ of features maps, and the objective function is an f-GAN, the optimal distribution $\nu$ output by the GAN will satisfy $\bbE_{x \sim \mu} [\psi(x)] = \bbE_{x \sim \nu}[\psi(x)]$ regardless of the specific $f$-divergence chosen in the objective function. Furthermore, we show that a neural network GAN is just a supremum of linear GANs, therefore has the same moment-matching effect.

We next address convergence in \sectionref{CR}.  We show that convergence in an adversarial divergence implies some standard notion of topological convergence. Particularly, we show that provided an objective function is a strict adversarial divergence, convergence to $\mu$ in the objective function implies weak convergence of the output distribution to $\mu$. While convergence properties of some isolated objective functions were known before~\cite{ArjovskyCB17}, this result extends them to a broad class of GANs. An additional consequence of this result is the observation that as the Wasserstein distance metrizes weak convergence of probability distributions~(see e.g.~\cite{oldandnew}), Wasserstein-GANs have the
weakest\footnote{Weakness is actually a desirable property since it prevents the divergence from being too discriminative (saturate),
thus providing more information about how to modify the model to approximate the true distribution.} 
objective functions in the class of strict adversarial divergences.  

\section{Notations}
We use bold constants (e.g., $\mathbf{0}$, $\mathbf{1}$, $\mathbf{x_0}$) to denote constant functions. We denote by $f\comp g$ the function composition of $f$ and $g$. We denote by $Y^X$ the set of functions maps from the set $X$ to the set $Y$. We denote by $\mu\otimes\nu$ the product measure of $\mu$ and $\nu$. We denote by $\interior(X)$ the interior of the set $X$. We denote by $\bbE_{\mu}[f]$ the integral of $f$ with respect to measure $\mu$ .

Let $f: \Real\to\Real\cup\{+\infty\}$ be a convex function, we denote by $\dom f$ the effective domain of $f$, that is, 
    $\dom f = \left\{x\in\Real, f(x) < +\infty\right\}$;
and we denote by $f^*$ the convex conjugate of $f$, that is, 
         $f^*(x^*) = \sup_{x\in \Real} \left\{x^*\cdot x - f(x)\right\}$.

For a topological space $\Omega$, we denote by $C(\Omega)$ the set of continuous functions on $\Omega$, $C_b(\Omega)$ the set of bounded continuous functions on $\Omega$, $\radon(\Omega)$ the set of finite signed regular Borel measures on $\Omega$, and $\prob(\Omega)$ the set of probability measures on $\Omega$. 

Given a non-empty subspace $Y$ of a topological space $X$, denote by $X/Y$ the quotient space equipped with the quotient topology $\sim_Y$, where for any $a, b\in X$, $a\sim_Y b$ if and only if $a = b$ or $a, b$ both belong to $Y$. The equivalence class of each element $a\in X$ is denoted as $[a] = \{b: a \sim_Y b\}$.

\section{General Framework}\label{generalframework}
Let $\mu$ be the target data distribution from which we can draw samples. Our goal is to find a generative model $\nu$ to approximate $\mu$. Informally, most GAN-style algorithms model this approximation as solving the following problem
\begin{align*}
    \inf_{\nu}\sup_{f\in \mathcal{F}}\bbE_{x\sim \mu,~y\sim \nu} \left[f(x, y)\right],
\end{align*}
where $\mathcal{F}$ is a class of functions.
The process is usually considered {\em{adversarial}} in the sense that it can be thought of as a two-player minimax game, where a
{\em generator} $\nu$ is trying to mimick the true distribution $\mu$, and a {\em adversary} $f$ is trying to distinguish
between the true and generated distributions. However, another way to look at it is as the minimization of the following objective function
\begin{align}
    \nu\mt \sup_{f\in \mathcal{F}}\bbE_{x\sim \mu,~y\sim \nu} \left[f(x, y)\right]\label{objfunc}
\end{align}
This objective function measures how far the target distribution $\mu$ is from the current estimate $\nu$. Hence, minimizing this function can lead to a good approximation of the target distribution $\mu$.

This leads us to the concept of {\em{adversarial divergence}}.
\begin{definition}[Adversarial divergence]\label{ad-div}
    Let $X$ be a topological space, $\mathcal{F}\subseteq C_b(X^2)$, $\mathcal{F}\neq\emptyset$. An adversarial divergence $\tau$ over $X$ is a function
    \begin{align}
        \prob(X)\times\prob(X)&\lt \Real\cup\{+\infty\}\nonumber\\
        (\mu, \nu) & \mt \tau(\mu||\nu) = \sup_{f\in \mathcal{F}}\bbE_{\mu\otimes\nu}\left[ f\right].\label{tobesame}
    \end{align}
\end{definition}
Observe that in \definitionref{ad-div} if we have a fixed target distribution $\mu$, then \eqref{tobesame} is reduced to the objective function \eqref{objfunc}.
Also, notice that because $\tau$ is the supremum of a family of linear functions (in each of the variables $\mu$ and $\nu$ separately), it is convex in each of its variables.

\definitionref{ad-div} captures the objective functions used by a variety of existing GAN-style procedures. In practice, although the function class $\mathcal{F}$ can be complicated, it is usually a transformation of a simple function class $\mathcal{V}$, which is the set of {\em{discriminators}} or {\em{critics}}, as they have been called in the GAN literature. We give some examples by specifying $\mathcal{F}$ and $\mathcal{V}$ for each objective function.

\begin{enumerate}
    \item[(a)]{GAN~\cite{goodfellow2014generative}.\,}  
        \begin{align*}
            \Va&= \left\{x, y\mapsto \log(u(x)) + \log(1 - u(y)): u\in \mathcal{V}\right\}
            \\\mathcal{V} &= (0, 1)^{X}\cap C_b(X).
        \end{align*}
    \item[(b)]{$f$-GAN~\cite{NowozinCT16}.\,} Let $f: \Real\to\Real\cup\{\infty\}$ be a convex lower semi-continuous function. Assume $f^*(x) \geq x$ for any $x\in\Real$, $f^*$ is continuously differentiable on $\interior(\dom f^*)$, and there exists $x_0\in\interior(\dom f^*)$ such that $f^*(x_0) = x_0$. 
    \begin{align*}
        \Va &= \left\{x, y\mapsto v(x) - f^*(v(y)): v\in \mathcal{V}\right\},\\\mathcal{V} &= (\dom f^*)^{X}\cap C_b(X).
\end{align*} 

\item[(c)]{MMD-GAN~\cite{li2015generative, DziugaiteRG15}.\,} Let $k: X^2\to\Real$ be a universal reproducing kernel. Let $\mathcal{M}$ be the set of signed measures on $X$.
    \begin{align*}
        \Va &= \left\{x, y\mapsto v(x) - v(y): v\in\mathcal{V}\right\},\\\mathcal{V} &= \left\{x\mapsto\bbE_{\mu}\left[k(x, \cdot)\right]: \mu\in\mathcal{M},~\bbE_{\mu^2}[k]\leq 1\right\}.
    \end{align*}

\item[(d)]{Wasserstein-GAN (WGAN)~\cite{ArjovskyCB17}.\,} Assume $X$ is a metric space.
    \begin{align*}
        \Va &= \left\{x, y\mapsto v(x) - v(y): v\in \mathcal{V}\right\},\\\mathcal{V} &= \left\{v\in C_b(X): \norm{v}_{\textnormal{Lip}}\leq K\right\},
    \end{align*}
        where $K$ is a positive constant, $\norm{\cdot}_{\textnormal{Lip}}$ denotes the Lipschitz constant.

    \item[(e)]{WGAN-GP (Improved WGAN)~\cite{GulrajaniAADC17}.\,} Assume $X$ is a convex subset of a Euclidean space.
    \begin{align*}
        \Va &= \{x, y\mapsto v(x) - v(y) - \eta\bbE_{t\sim U}\left[\left(\norm{\nabla v(t x + (1 -t) y)}_2 - 1\right)^p\right]: v\in \mathcal{V}\},\\\mathcal{V} &= C^1(X),
    \end{align*}
        where $U$ is the uniform distribution on $[0, 1]$, $\eta$ is a positive constant, $p\in (1, \infty)$.

    \item[(f)]{(Regularized) Optimal Transport~\cite{GenevayCPB16}.\,} 
        \footnote{To the best of our knowledge, neither \eqref{loss1} or \eqref{loss2} was used in any GAN algorithm. However, since our focus in this paper is not implementing new algorithms, we leave experiments with this formulation for future work.}
        Let $c: X^2\to \Real$ be some transportation cost function, $\epsilon \geq 0$ be the strength of regularization. If $\epsilon = 0$ (no regularization), then
    \begin{align}
        \Va &= 
            \left\{x, y\mapsto u(x) + v(y): (u, v)\in \mathcal{V}\right\},\label{loss1}\\\mathcal{V} &= \left\{(u, v) \in C_b(X)\times C_b(X),~u(x) + v(y) \leq c(x, y)~\text{for any}~x, y\in X\right\};\nonumber
   \end{align}
        if $\epsilon > 0$, then
    \begin{align}
        \Va &= 
            \left\{x, y\mapsto u(x) + v(y) - \epsilon \exp\left(\frac{u(x) + v(y) - c(x, y)}{\epsilon}\right): u, v\in \mathcal{V}\right\},\label{loss2}\\ \mathcal{V} &= C_b(X).\nonumber
    \end{align}
\end{enumerate}

In order to study an adversarial divergence $\tau$, it is critical to first understand at which points the divergence is minimized. More precisely, let $\tau$ be an adversarial divergence and $\mu^*$ be the target probability measure. We are interested in the set of probability measures that minimize the divergence $\tau$ when the first argument of $\tau$ is set to $\mu^*$, i.e., the set $\argmin\tau(\mu^*||\cdot) = \left\{\mu: \tau(\mu^*||\mu) = \inf_{\nu}\tau(\mu^*||\nu)\right\}$. Formally, we define the set $\OPT_{\tau, \mu^*}$ as follows.
\begin{definition}[$\OPT_{\tau, \mu^*}$]
    Let $\tau$ be an adversarial divergence over a topological space $X$, $\mu^*\in\prob(X)$. Define $\OPT_{\tau, \mu^*}$ to be the set of probability measures that minimize the function $\tau(\mu^*||\cdot)$. That is, 
    \begin{align*}
        \OPT_{\tau, \mu^*} \bydef \left\{\mu\in\prob(X): \tau(\mu^*||\mu) = \inf_{\mu'\in\prob(X)}\tau(\mu^*||\mu')\right\}.
    \end{align*}
\end{definition}
Ideally, the target probability measure $\mu^*$ should be one and the only one that minimizes the objective function. The notion of {\em{strict adversarial divergence}} captures this property.
\begin{definition}[Strict adversarial divergence]\label{str-ad-div}
    Let $\tau$ be an adversarial divergence over a topological space $X$, $\tau$ is called a strict adversarial divergence if for any $\mu^*\in\prob(X)$, $\OPT_{\tau, \mu^*} = \{\mu^*\}$.
\end{definition}

For example, if the underlying space $X$ is a compact metric space, then examples (c) and (d) induce metrics on $\prob(X)$ (see, e.g., \cite{SriperumbudurGFSL10}), therefore are strict adversarial divergences.

In the next two sections, we will answer two questions regarding the set $\OPT_{\tau, \mu^*}$: how well do the elements in $\OPT_{\tau, \mu^*}$ approximate the target distribution $\mu^*$ when restricting the class of discriminators? (\sectionref{RFM}); and does a sequence of distributions that converges in an adversarial divergence also converges to $\OPT_{\tau, \mu^*}$ under some standard notion of distributional convergence? (\sectionref{CR})

\section{Generalized Moment Matching}\label{RFM}
To motivate the discussion in this section, recall example (b) in \sectionref{generalframework}). It can be shown that under some mild conditions, $\tau$, the objective function of $f$-GAN, is actually the $f$-divergence, and the minimizer of $\tau(\mu^*||\cdot)$ is only $\mu^*$~\cite{NowozinCT16}. However, in practice, the discriminator class $\mathcal{V}$ is usually implemented by a feedforward neural network, and it is known that a fixed neural network has limited capacity (e.g., it cannot implement the set of all the bounded continuous function). Therefore, one could ask what will happen if we restrict $\mathcal{V}$ to a sub-class $\mathcal{V}'$? Obviously one would expect $\mu^*$ not be the unique minimizer of $\tau(\mu^*||\cdot)$ anymore, that is, $\OPT_{\tau, \mu^*}$ contains elements other than $\mu^*$. What can we say about the elements in $\OPT_{\tau, \mu^*}$ now? Are all of them close to $\mu^*$ in a certain sense? In this section we will answer these questions.

More formally, we consider $\mathcal{F} =\left\{m_{\theta} - r_{\theta}: \theta\in\Theta\right\} $ to be a function class indexed by a set $\Theta$.
We can think of $\Theta$ as the parameter set of a feedforward neural network.
Each $m_{\theta}$ is thought to be a matching between two distributions,
in the sense that $\mu$ and $\nu$ are matched under $m_{\theta}$ if and only if $\bbE_{\mu\otimes\nu}[m_\theta] = 0$.
In particular, if each $m_{\theta}$ is corresponding to some function $v_{\theta}$ such that
$m_{\theta}(x, y) = v_{\theta}(x) - v_{\theta}(y)$, then $\mu$ and $\nu$ are matched under $m_{\theta}$
if and only if some generalized moment of $\mu$ and $\nu$ are equal: $\bbE_{\mu}[v_{\theta}] = \bbE_{\nu}[v_{\theta}]$.
Each $r_{\theta}$ can be thought as a residual.

We will now relate the matching condition to the optimality of the divergence. In particular, define
\[
 \mathcal{M}_{\mu^*} \bydef \left\{\mu: \forall \theta\in\Theta,~\bbE_{\mu^*}[v_{\theta}] = \bbE_{\mu}[v_{\theta}]\right\}\,,
\]
We will give sufficients conditions for members of $\mathcal{M}_{\mu^*}$ to be in $\OPT_{\tau, \mu^*}$.
\begin{theorem}\label{th:inclusion}
Let $X$ be a topological space, $\Theta\subseteq \Real^n$, $\mathcal{V} = \left\{v_{\theta}\in C_b(X): \theta\in\Theta\right\}$,
$\mathcal{R} = \left\{r_{\theta}\in C_b(X^2): \theta\in\Theta\right\}$.
Let $m_{\theta}(x, y) = v_{\theta}(x) - v_{\theta}(y)$.
If there exists $c\in\Real$ such that for any $\mu, \nu\in\prob(X)$,
$\inf_{\theta\in\Theta} \bbE_{\mu\otimes\nu}[r_{\theta}] = c$ and there exists some $\tmn\in\Theta$ such that
$\bbE_{\mu\otimes\nu}[r_{\tmn}]= c$ and $\bbE_{\mu\otimes\nu}[m_{\tmn}]\geq 0$, then
$\tau(\mu||\nu) = \sup_{\theta\in\Theta}\bbE_{\mu\otimes\nu}\left[m_{\theta} - r_{\theta}\right]$
is an adversarial divergence over $X$ and for any $\mu^*\in\prob(X)$,
\[
 \OPT_{\tau, \mu^*} \supset \mathcal{M}_{\mu^*} \,.
\]
\end{theorem}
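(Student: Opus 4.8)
The plan is to verify two things in turn: that $\tau$ is an adversarial divergence in the sense of \definitionref{ad-div}, and that every $\mu\in\mathcal{M}_{\mu^*}$ attains the infimum $\inf_{\nu}\tau(\mu^*||\nu)$. For the first, I would simply note that each $m_\theta-r_\theta$ lies in $C_b(X^2)$: the map $m_\theta(x,y)=v_\theta(x)-v_\theta(y)$ is bounded and continuous because $v_\theta\in C_b(X)$, and $r_\theta\in C_b(X^2)$ by hypothesis. The assumed existence of $\theta^{\mu}_{\nu}$ forces $\Theta\neq\emptyset$, so $\mathcal{F}=\{m_\theta-r_\theta:\theta\in\Theta\}$ is a nonempty subset of $C_b(X^2)$ and $\tau$ is an adversarial divergence.

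The core of the argument is to identify $\inf_{\nu}\tau(\mu^*||\nu)$, and the hypotheses are designed precisely so that this value equals $-c$. First I would prove the lower bound $\tau(\mu^*||\nu)\geq -c$ for every $\nu\in\prob(X)$: evaluating the supremum that defines $\tau$ at the particular index $\tmsn$ gives
\[
 \tau(\mu^*||\nu)\;\geq\;\bbE_{\mu^*\otimes\nu}[m_{\tmsn}]-\bbE_{\mu^*\otimes\nu}[r_{\tmsn}]\;=\;\bbE_{\mu^*\otimes\nu}[m_{\tmsn}]-c\;\geq\;-c,
\]
where the equality uses $\bbE_{\mu^*\otimes\nu}[r_{\tmsn}]=c$ and the final inequality uses $\bbE_{\mu^*\otimes\nu}[m_{\tmsn}]\geq 0$. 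Hence $\inf_{\nu}\tau(\mu^*||\nu)\geq -c$.

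Next I would show this bound is attained exactly on $\mathcal{M}_{\mu^*}$, which is nonempty since $\mu^*\in\mathcal{M}_{\mu^*}$. If $\mu\in\mathcal{M}_{\mu^*}$ then $\bbE_{\mu^*}[v_\theta]=\bbE_{\mu}[v_\theta]$ for all $\theta$, and since each $v_\theta$ is bounded, Fubini gives $\bbE_{\mu^*\otimes\mu}[m_\theta]=\bbE_{\mu^*}[v_\theta]-\bbE_{\mu}[v_\theta]=0$ for every $\theta\in\Theta$; therefore
\[
 \tau(\mu^*||\mu)\;=\;\sup_{\theta\in\Theta}\bigl(-\bbE_{\mu^*\otimes\mu}[r_\theta]\bigr)\;=\;-\inf_{\theta\in\Theta}\bbE_{\mu^*\otimes\mu}[r_\theta]\;=\;-c,
\]
by the constancy hypothesis applied to the pair $(\mu^*,\mu)$. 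Combining with the lower bound yields $\inf_{\nu}\tau(\mu^*||\nu)=-c$ and $\tau(\mu^*||\mu)=-c$ for every $\mu\in\mathcal{M}_{\mu^*}$, i.e.\ $\mathcal{M}_{\mu^*}\subseteq\OPT_{\tau,\mu^*}$.

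I do not expect a genuine obstacle; the only point requiring care is the logical order — the infimum cannot be pinned to $-c$ until one knows $\mathcal{M}_{\mu^*}\neq\emptyset$ so that the value $-c$ is actually realized, which is why the trivial membership $\mu^*\in\mathcal{M}_{\mu^*}$ is invoked. It is also worth remarking that $\tau(\mu^*||\nu)$ may be $+\infty$ for some $\nu\notin\mathcal{M}_{\mu^*}$, but since \definitionref{ad-div} permits the value $+\infty$ this only reinforces the conclusion rather than causing trouble.
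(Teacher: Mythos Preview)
Your proposal is correct and follows essentially the same route as the paper: establish the uniform lower bound $\tau(\mu^*||\nu)\geq -c$ by evaluating at $\tmsn$, and then show that the bound is achieved on $\mathcal{M}_{\mu^*}$ because the matching condition kills the $m_\theta$ term, leaving $\sup_\theta(-\bbE_{\mu^*\otimes\mu}[r_\theta])=-c$. The paper is slightly terser (it omits the explicit verification that $m_\theta-r_\theta\in C_b(X^2)$ and simply computes $\tau(\mu^*||\mu^*)=-c$ directly rather than invoking $\mu^*\in\mathcal{M}_{\mu^*}$), but the substance is identical.
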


We now review the examples (a)-(e) in \sectionref{generalframework}, show how to write each $f\in\mathcal{F}$ into $m_{\theta} - r_{\theta}$, and specify $\tmn$ in each case such that the conditions
of \autoref{th:inclusion} can be satisfied.
\begin{enumerate}
    \item[(a)]{GAN.~} Note that for any $x\in(0, 1)$, $\log\left(1 / ({x(1-x)})\right) \geq \log(4)$. Let $u_{\tmn} = \mathbf{\frac{1}{2}}$,
        \begin{align*}
            f_{\theta}(x, y) &= \log(u_{\theta}(x)) + \log(1 - u_{\theta}(y))\\
            &= \underbrace{\log(u_{\theta}(x)) - \log(u_{\theta}(y))}_{m_{\theta}(x, y)\left(\text{note}~\bbE_{\mu\otimes\nu}\left[m_{\tmn}\right] = 0\right)} - \underbrace{\log\left(1 / \left(u_{\theta}(y)(1 - u_{\theta}(y))\right)\right)}_{r_{\theta}(x, y) \left(\text{note}~r_{\theta}(x, y) \geq r_{\tmn}(x, y) = \log(4)\right)}.
        \end{align*}
    \item[(b)]{$f$-GAN.~} Recall that $f^*(x) - x \geq 0$ for any $x\in\Real$ and $f^*(x_0) = x_0$. Let $v_{\tmn} = \mathbf{x_0}$,
        \begin{align}
            f_{\theta}(x, y) &= v_{\theta}(x) - f^*(v_{\theta}(y))\nonumber\\
            &= \underbrace{v_{\theta}(x) - v_{\theta}(y)}_{m_{\theta}(x, y)\left(\text{note}~\bbE_{\mu\otimes\nu}\left[m_{\tmn}\right] = 0\right)} - \underbrace{\left(f^*(v_{\theta}(y)) - v_{\theta}(y)\right)}_{r_{\theta}(x, y)\left(\text{note}~r_{\theta}(x, y) \geq r_{\tmn}(x, y) = 0\right)}.\label{fganmr}
        \end{align}
    \item[(c, d)]{MMD-GAN or Wasserstein-GAN.} Let $v_{\tmn} = \mathbf{0}$,
        \begin{align*}
            f_{\theta}(x, y) &= \underbrace{v_{\theta}(x) - v_{\theta}(y)}_{m_{\theta}(x, y)\left(\text{note}~\bbE_{\mu\otimes\nu}\left[m_{\tmn}\right] = 0\right)} - \underbrace{0}_{r_{\theta}(x, y)\left(\text{note}~r_{\theta}(x, y) = r_{\tmn}(x, y) = 0\right)}.
        \end{align*}
    \item[(e)]{WGAN-GP.~} Note that the function $x\mapsto x^p$ is nonnegative on $\Real$. Let
        \begin{align*}
            v_{\tmn} = \begin{cases}(x_1, x_2, \cdots, x_n)\mapsto \frac{\sum_{i = 1}^n x_i}{\sqrt{n}}, &\textnormal{~~if $\bbE_{\mu}[\sum_{i = 1}^n x_i] \geq \bbE_{\nu}[\sum_{i = 1}^n x_i]$,}\\ (x_1, x_2, \cdots, x_n)\mapsto -\frac{\sum_{i = 1}^n x_i}{\sqrt{n}}, &\textnormal{~~otherwise},\end{cases}
        \end{align*}
    \begin{align*}
        f_{\theta}(x, y) &= \underbrace{v_{\theta}(x) - v_{\theta}(y)}_{m_{\theta}(x, y)\left(\text{note}~\bbE_{\mu\otimes\nu}\left[m_{\tmn}\right] \geq 0\right)} - \underbrace{\eta\bbE_{t\sim U}\left[\left(\norm{\nabla v(t x + (1 -t) y)}_2 - 1\right)^p\right]}_{r_{\theta}(x, y)\left(\text{note}~r_{\theta}(x, y) \geq r_{\tmn}(x, y) = 0\right)}.
    \end{align*}
\end{enumerate}

We now refine the previous result and show that under some additional conditions on $m_\theta$ and $r_\theta$, the optimal
elements of $\tau$ are fully characterized by the matching condition, i.e. $\OPT_{\tau, \mu^*}=\mathcal{M}_{\mu^*}$.
\begin{theorem}\label{kerthm}
Under the assumptions of \autoref{th:inclusion}, 
if $\tmn\in\interior(\Theta)$ and both $\theta\mapsto\bbE_{\mu\otimes\nu}[m_\theta]$ and $\theta\mapsto\bbE_{\mu\otimes\nu}[r_\theta]$
have gradients at $\tmn$, and
        \begin{align}
            \left(\bbE_{\mu\otimes\nu}[m_{\tmn}] =  0~\text{and}~\exists \theta',~\bbE_{\mu\otimes\nu}\left[m_{\theta'}\right] \neq 0\right) \implies \nabla_{\tmn}\bbE_{\mu\otimes\nu}\left[m\right] \neq \mathbf{0}.\label{critical}
        \end{align}
    Then for any $\mu^*\in\prob(X)$, 
        \begin{align}
            \OPT_{\tau, \mu^*} = \mathcal{M}_{\mu^*}.\label{toshow2}
        \end{align}
\end{theorem}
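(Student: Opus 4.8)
The plan is to prove only the inclusion $\OPT_{\tau,\mu^*}\subseteq\mathcal{M}_{\mu^*}$, since the reverse inclusion $\OPT_{\tau,\mu^*}\supseteq\mathcal{M}_{\mu^*}$ is exactly the content of \theoremref{th:inclusion}. The first step is to pin down the minimal value of the divergence. Since $m_\theta(x,y)=v_\theta(x)-v_\theta(y)$ forces $\bbE_{\mu^*\otimes\mu^*}[m_\theta]=0$ for every $\theta$, one gets $\tau(\mu^*||\mu^*)=\sup_{\theta}\left(-\bbE_{\mu^*\otimes\mu^*}[r_\theta]\right)=-c$ by the hypothesis $\inf_\theta\bbE_{\mu\otimes\nu}[r_\theta]=c$; and for an arbitrary $\nu\in\prob(X)$, evaluating the objective at the index $\tmsn$ supplied by \theoremref{th:inclusion} for the pair $(\mu^*,\nu)$ gives $\tau(\mu^*||\nu)\ge\bbE_{\mu^*\otimes\nu}[m_{\tmsn}]-c\ge-c$. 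Hence $\inf_{\nu}\tau(\mu^*||\nu)=-c$.

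Next I would fix an arbitrary $\mu\in\OPT_{\tau,\mu^*}$, so that $\sup_{\theta\in\Theta}\bbE_{\mu^*\otimes\mu}[m_\theta-r_\theta]=\tau(\mu^*||\mu)=-c$, and bring in the index $\tmsm$ guaranteed by the hypotheses of \theoremref{th:inclusion} for the pair $(\mu^*,\mu)$, which satisfies $\bbE_{\mu^*\otimes\mu}[r_{\tmsm}]=c$ and $\bbE_{\mu^*\otimes\mu}[m_{\tmsm}]\ge0$. Evaluating the objective at $\tmsm$ gives $\bbE_{\mu^*\otimes\mu}[m_{\tmsm}]-c\le-c$, which combined with $\bbE_{\mu^*\otimes\mu}[m_{\tmsm}]\ge0$ yields $\bbE_{\mu^*\otimes\mu}[m_{\tmsm}]=0$ and shows that $\tmsm$ actually attains the supremum $-c$. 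Consequently $\tmsm$ is simultaneously a global maximizer over $\Theta$ of $\theta\mapsto\bbE_{\mu^*\otimes\mu}[m_\theta-r_\theta]$ and a global minimizer over $\Theta$ of $\theta\mapsto\bbE_{\mu^*\otimes\mu}[r_\theta]$ (the latter because this map takes the value $c$ at $\tmsm$ and has infimum $c$).

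The last step uses the additional hypotheses: $\tmsm\in\interior(\Theta)$ and differentiability of $\theta\mapsto\bbE_{\mu^*\otimes\mu}[m_\theta]$ and $\theta\mapsto\bbE_{\mu^*\otimes\mu}[r_\theta]$, hence of their difference, at $\tmsm$. First-order optimality at an interior extremum gives $\nabla_{\tmsm}\bbE_{\mu^*\otimes\mu}[m-r]=\mathbf{0}$ and $\nabla_{\tmsm}\bbE_{\mu^*\otimes\mu}[r]=\mathbf{0}$; adding these two relations and using linearity of the gradient yields $\nabla_{\tmsm}\bbE_{\mu^*\otimes\mu}[m]=\mathbf{0}$. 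Since we already have $\bbE_{\mu^*\otimes\mu}[m_{\tmsm}]=0$, the contrapositive of \eqref{critical} forbids the existence of any $\theta'$ with $\bbE_{\mu^*\otimes\mu}[m_{\theta'}]\ne0$; that is, $\bbE_{\mu^*}[v_\theta]=\bbE_{\mu}[v_\theta]$ for every $\theta\in\Theta$, which is precisely $\mu\in\mathcal{M}_{\mu^*}$. This establishes \eqref{toshow2}.

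I expect the main difficulty to be organizational rather than technical: arranging for the single index $\tmsm$ to serve at once as a maximizer of the full objective and a minimizer of the residual term --- a coincidence that holds only because the value $-c$ is forced at $\tmsm$ --- and then correctly reading \eqref{critical} in contrapositive form. I would also be careful to check that nothing in the argument requires the supremum defining $\tau(\mu^*||\mu)$ to be attained in general: only the one-sided bound $\bbE_{\mu^*\otimes\mu}[m_{\tmsm}-r_{\tmsm}]\le-c$ is needed, together with the gradient hypotheses evaluated at the single point $\tmsm$.
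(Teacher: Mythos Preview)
Your proof is correct and uses essentially the same ingredients as the paper's proof: the identification $\inf_\nu\tau(\mu^*\|\nu)=-c$, Fermat's stationary point condition at the interior minimizer $\tmsm$ of $\theta\mapsto\bbE_{\mu^*\otimes\mu}[r_\theta]$, and condition~\eqref{critical}. The only difference is organizational: the paper argues by contrapositive (assuming $\mu\notin\mathcal{M}_{\mu^*}$ and splitting into the cases $\bbE_{\mu^*\otimes\mu}[m_{\tmsm}]>0$ versus $=0$ to exhibit $\tau(\mu^*\|\mu)>-c$), whereas your direct argument from $\mu\in\OPT_{\tau,\mu^*}$ immediately forces $\bbE_{\mu^*\otimes\mu}[m_{\tmsm}]=0$ and thereby eliminates that case distinction.
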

We remark that \theoremref{th:inclusion} is relatively intuitive, while \theoremref{kerthm} requires extra conditions, and is quite counter-intuitive especially for algorithms like $f$-GANs.

\subsection{Example: Linear $f$-GAN} 
We first consider a simple algorithm called {\em{linear $f$-GAN}}. Suppose we are provided with a feature map $\psi$ that maps each point $x$ in the sample space $X$ to a feature vector $(\psi_1(x), \psi_2(x), \cdots, \psi_n(x))$ where each $\psi_i\in C_b(X)$. We are satisfied that any distribution $\mu$ is a good approximation of the target distribution $\mu^*$ as long as $\bbE_{\mu^*}[\psi] = \bbE_{\mu}[\psi]$. For example, if $X\subseteq\Real$ and $\psi_k(x) = x^k$, to say $\bbE_{\mu^*}[\psi] = \bbE_{\mu}[\psi]$ is equivalent to say the first $n$ moments of $\mu^*$ and $\mu$ are matched. Recall that in the standard $f$-GAN (example (b) in \sectionref{generalframework}), $\mathcal{V} = (\dom f^*)^{X}\cap C_b(X)$. Now instead of using the discriminator class $\mathcal{V}$, we use a restricted discriminator class $\mathcal{V}'\subseteq \mathcal{V}$, containing the linear (or more precisely, affine) transformations of $\psi$
\begin{align*}
    \mathcal{V}' = \left\{\theta^{\mathsf{T}}(\psi, \mathbf{1}): \theta \in\Theta\right\}\subseteq \mathcal{V},
\end{align*}
where $\Theta = \left\{\theta\in\Real^{n + 1}: \forall x\in X,~\theta^{\mathsf{T}}(\psi(x), 1)\in\dom f^*\right\}$. We will show that now $\OPT_{\tau, \mu^*}$ contains exactly those $\mu$ such that $\bbE_{\mu^*}[\psi] = \bbE_{\mu}[\psi]$, regardless of the specific $f$ chosen. Formally, 
\begin{corollary}[linear $f$-GAN]\label{fgan-col}
    Let $X$ be a compact topological space. Let $f$ be a function as defined in example (b) of \sectionref{generalframework}. Let $\psi = (\psi_i)_{i=1}^n$ be a vector of continuously differentiable functions on $X$. Let $\Theta = \left\{\theta\in\Real^{n + 1}: \forall x\in X,~\theta^{\mathsf{T}}(\psi(x), 1)\in\dom f^*\right\}$.
    Let $\tau$ be the objective function of the linear $f$-GAN
    \[
        \tau(\mu||\nu) = \sup_{ \theta\in\Theta}\left(\bbE_{\mu}[\theta^{\mathsf{T}}(\psi, \mathbf{1})] - \bbE_{\nu}[f^*\comp(\theta^{\mathsf{T}}(\psi, \mathbf{1}))]\right).
        \]
    Then for any $\mu^*\in\prob(X)$, 
        $
        \OPT_{\tau, \mu^*} = \left\{\mu: \tau(\mu^*||\mu) = 0\right\} = \left\{\mu: \bbE_{\mu^*}[\psi] = \bbE_{\mu}[\psi]\right\} \ni \mu^*.
        $
\end{corollary}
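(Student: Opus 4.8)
The plan is to deduce the corollary from \autoref{kerthm}. Each critic in $\mathcal{V}'$ is $v_{\theta} = \theta^{\mathsf{T}}(\psi, \mathbf{1})$ with $\theta \in \Theta$, and I would decompose the $f$-GAN integrand exactly as in \eqref{fganmr}: $m_{\theta}(x,y) = v_{\theta}(x) - v_{\theta}(y)$ and $r_{\theta}(x,y) = f^*(v_{\theta}(y)) - v_{\theta}(y)$, which is $\ge 0$ everywhere since $f^*(t) \ge t$ for all $t$. Take $\tmn := (0,\dots,0,x_0)$, so $v_{\tmn} = \mathbf{x_0}$; then $m_{\tmn} \equiv 0$ and $r_{\tmn} \equiv 0$, so for all $\mu, \nu \in \prob(X)$ we have $\inf_{\theta\in\Theta}\bbE_{\mu\otimes\nu}[r_\theta] = 0 =: c$, attained at $\tmn$, and $\bbE_{\mu\otimes\nu}[m_{\tmn}] = 0 \ge 0$. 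Hence the hypotheses of \autoref{th:inclusion} hold (so $\tau$ is an adversarial divergence and $\OPT_{\tau,\mu^*} \supset \mathcal{M}_{\mu^*}$), and it remains only to verify the three extra hypotheses of \autoref{kerthm}.

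I would verify them in turn. (i) $\tmn \in \interior(\Theta)$: the $\psi_i$ are continuous on the compact space $X$, hence uniformly bounded, and $x_0 \in \interior(\dom f^*)$; so for $\theta$ in a small enough ball about $\tmn$ the function $x \mapsto \theta^{\mathsf{T}}(\psi(x), 1)$ stays uniformly inside $\dom f^*$, and that ball lies in $\Theta$. (ii) The map $\theta \mapsto \bbE_{\mu\otimes\nu}[m_\theta] = \theta^{\mathsf{T}}\bigl(\bbE_\mu[\psi] - \bbE_\nu[\psi],\, 0\bigr)$ is linear, hence differentiable everywhere with gradient $\bigl(\bbE_\mu[\psi] - \bbE_\nu[\psi], 0\bigr)$; and $\theta \mapsto \bbE_{\mu\otimes\nu}[r_\theta] = \bbE_\nu\bigl[f^*(v_\theta) - v_\theta\bigr]$ is differentiable at $\tmn$ by differentiation under the integral sign. (iii) For \eqref{critical}: the premise $\bbE_{\mu\otimes\nu}[m_{\tmn}] = 0$ always holds here, and if some $\theta'$ gives $\bbE_{\mu\otimes\nu}[m_{\theta'}] \ne 0$, i.e.\ $(\theta')^{\mathsf{T}}\bigl(\bbE_\mu[\psi] - \bbE_\nu[\psi], 0\bigr) \ne 0$, then $\bbE_\mu[\psi] \ne \bbE_\nu[\psi]$, whence $\nabla_{\tmn}\bbE_{\mu\otimes\nu}[m] = \bigl(\bbE_\mu[\psi] - \bbE_\nu[\psi], 0\bigr) \ne \mathbf{0}$.

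\autoref{kerthm} then gives $\OPT_{\tau,\mu^*} = \mathcal{M}_{\mu^*}$, and I would finish by rewriting the remaining two sets. For $\mathcal{M}_{\mu^*}$: its defining condition ``$\bbE_{\mu^*}[v_\theta] = \bbE_\mu[v_\theta]$ for all $\theta\in\Theta$'' reads ``$\theta^{\mathsf{T}}\bigl(\bbE_{\mu^*}[\psi] - \bbE_\mu[\psi], 0\bigr) = 0$ for all $\theta \in \Theta$''; since $\Theta$ has nonempty interior, $\Theta - \tmn$ contains a neighborhood of the origin and hence spans $\Real^{n+1}$, so this is equivalent to $\bbE_{\mu^*}[\psi] = \bbE_\mu[\psi]$, giving $\mathcal{M}_{\mu^*} = \{\mu : \bbE_{\mu^*}[\psi] = \bbE_\mu[\psi]\} \ni \mu^*$. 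For the middle set: taking $\theta = \tmn$ in the supremum gives $\tau(\mu^*||\nu) \ge x_0 - x_0 = 0$ for every $\nu$, while $\tau(\mu^*||\mu^*) = \sup_{\theta\in\Theta}\bbE_{\mu^*}\bigl[v_\theta - f^*(v_\theta)\bigr] \le 0$ because $f^*(t) \ge t$; hence $\inf_\nu \tau(\mu^*||\nu) = 0$, so $\OPT_{\tau,\mu^*} = \{\mu : \tau(\mu^*||\mu) = 0\}$. Chaining the three equalities finishes the proof.

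The only genuinely analytic point is the exchange of differentiation and integration in (ii), and this is the step I expect to demand the most care: on a small closed ball about $\tmn$ all values $v_\theta(x)$, $x\in X$, lie in one fixed compact subset of $\interior(\dom f^*)$ — here compactness of $X$ is essential — on which $(f^*)'$ is continuous, hence bounded, supplying a dominating function uniformly in $\theta$. The remaining facts ($\mathcal{V}'\subseteq\mathcal{V}$, and that $\tau$ has the restricted adversarial-divergence form) are immediate from the definition of $\Theta$ together with compactness of $X$.
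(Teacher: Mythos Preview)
Your proposal is correct and follows essentially the same route as the paper: the identical decomposition $m_\theta(x,y)=v_\theta(x)-v_\theta(y)$, $r_\theta(x,y)=f^*(v_\theta(y))-v_\theta(y)$ with $\tmn=(0,\dots,0,x_0)$, the same use of compactness of $X$ to place $\tmn$ in $\interior(\Theta)$ and to justify differentiation under the integral, and the same verification of \eqref{critical} via the explicit gradient $(\bbE_\mu[\psi]-\bbE_\nu[\psi],0)$. Your argument is in fact slightly more complete than the paper's, since you spell out why $\mathcal{M}_{\mu^*}=\{\mu:\bbE_{\mu^*}[\psi]=\bbE_\mu[\psi]\}$ (via $\Theta$ having nonempty interior and hence $\Theta-\tmn$ spanning $\Real^{n+1}$), a step the paper leaves implicit.
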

A very concrete example of \corollaryref{fgan-col} could be, for example, the linear KL-GAN, where $f(u) = u\log u$, $f^*(t) = \exp(t - 1)$, $\psi = (\psi_i)_{i = 1}^n$, $\Theta = \Real^{n + 1}$. The objective function is 
    \[
        \tau(\mu||\nu) = \sup_{ \theta\in\Real^{n + 1}}\left(\bbE_{\mu}[\theta^{\mathsf{T}}(\psi, \mathbf{1})] - \bbE_{\nu}[\exp(\theta^{\mathsf{T}}(\psi, \mathbf{1}) - 1)]\right),
        \]
    \subsection{Example: Neural Network $f$-GAN} 
    Next we consider a more general and practical example: an $f$-GAN where the discriminator class $\mathcal{V}' = \left\{v_{\theta}: \theta\in\Theta\right\}$ is implemented through a feedforward neural network with weight parameter set $\Theta$. We assume that all the activation functions are continuously differentiable (e.g., sigmoid, tanh), and the last layer of the network is a linear transformation plus a bias. We also assume $\dom f^* = \Real$ (e.g., the KL-GAN where $f^*(t) = \exp(t - 1)$). 
    
    Now observe that when all the weights before the last layer are fixed, the last layer acts as a discriminator in a {\em{linear}} $f$-GAN. More precisely, let $\Theta_{pre}$ be the index set for the weights before the last layer. Then each $\tpre\in\Tpre$ corresponds to a feature map $\psi^{\tpre}$. Let the linear $f$-GAN that corresponds to $\psi^{\tpre}$ be $\tau_{\tpre}$, the adversarial divergence induced by the Neural Network $f$-GAN is
    \begin{align*}
        \tau(\mu^*||\mu) = \sup_{\tpre\in\Tpre}\tau_{\tpre}(\mu^*||\mu)    \end{align*}
Clearly $\OPT_{\tau, \mu^*} \supseteq \bigcap_{\tpre\in\Tpre}\OPT_{\tau_{\tpre}, \mu^*}$. For the other direction, note that by \corollaryref{fgan-col}, for any $\tpre\in\Tpre$, $\tau_{\tpre}(\mu^*||\mu) \geq 0$ and $\tau_{\tpre}(\mu^*||\mu^*) = 0$. Therefore $\tau(\mu^*||\mu) \geq 0$ and $\tau(\mu^*||\mu^*) = 0$. If $\mu\in\OPT_{\tau, \mu^*}$, then $\tau(\mu^*||\mu) = 0$. As a consequence, $\tau_{\tpre}(\mu^*||\mu) = 0$ for any $\tpre\in\Tpre$. Therefore $\OPT_{\tau, \mu^*} \subseteq \bigcap_{\tpre\in\Tpre}\OPT_{\tau_{\tpre}, \mu^*}$. Therefore, by \corollaryref{fgan-col}, 
\[\OPT_{\tau, \mu^*} = \bigcap_{\tpre\in\Tpre}\OPT_{\tau_{\tpre}, \mu^*} = \left\{\mu: \forall \theta\in\Theta,~\bbE_{\mu^*}[v_{\theta}] = \bbE_{\mu}[v_{\theta}]\right\}.\]
That is, the minimizer of the Neural Network $f$-GAN are exactly those distributions that are indistinguishable under the expectation of any discriminator network $v_{\theta}$.
       
\section{Convergence}\label{CR}

To motivate the discussion in this section, consider the following question. Let $\delta_{x_0}$ be the delta distribution at $x_0\in\Real$, that is, $x = x_0$ with probability $1$. Now, does the sequence of delta distributions $\delta_{1/n}$ converges to $\delta_{1}$? Almost all the people would answer no. However, does the sequence of delta distributions $\delta_{1/n}$ converges to $\delta_{0}$? Most people would answer yes based on the intuition that $1/n\to 0$ and so does the sequence of corresponding delta distributions, even though the support of $\delta_{1/n}$ never has any intersection with the support of $\delta_0$. Therefore, convergence can be defined for distributions not only in a point-wise way, but in a way that takes consideration of the underlying structure of the sample space. 

Now returning to our adversarial divergence framework. Given an adversarial divergence $\tau$, is it possible that $\tau(\delta_{1}||\delta_{1/n})$ convreges to the global minimum of $\tau(\delta_1||\cdot)$? How to we define convergence to a set of points instead of only one point, in order to explain the convergence behaviour of {\em{any}} adversarial divergence? In this section we will answer these questions.

We start from two standard notions from functional analysis.
\begin{definition}[Weak-* topology on $\prob(X)$ (see e.g. \cite{R91})]
    Let $X$ be a compact metric space. By associating with each $\mu\in \radon(X)$ a linear function $f\mt \bbE_{\mu}[f]$ on $C(X)$, we have that $\radon(X)$ is the continuous dual of $C(X)$ with respect to the uniform norm on $C(X)$ (see e.g. \cite{HFAD16}). Therefore we can equip $\radon(X)$ (and therefore $\prob(X)$) with a weak-* topology, which is the coarsest topology on $\radon(X)$ such that $\left\{\mu\mapsto \bbE_{\mu}[f]: f\in C(X)\right\}$ is a set of continuous linear functions on $\radon(X)$. 
\end{definition}

\begin{definition}[Weak convergence of probability measures (see e.g. \cite{R91})]\label{weakconv}
    Let $X$ be a compact metric space. A sequence of probability measures $(\mu_n)$ in $\prob(X)$ is said to weakly converge to a measure $\mu^*\in\prob(X)$, if~$~\forall f\in C(X)$, 
        $
        \bbE_{\mu_n}[f] \to \bbE_{\mu_*}[f], 
    $
    or equivalently, if $(\mu_n)$ is weak-* convergent to $\mu^*$.
\end{definition}

The definition of weak-* topology and weak convergence respect the topological structure of the sample space. For example, it is easy to check that the sequence of delta distributions $\delta_{1/n}$ weakly converges to $\delta_0$, but not to $\delta_1$.

Now note that \definitionref{weakconv} only defines weak convergence of a sequence of probability measures to a {\em{single}} target measure. Here we generalize the definition for the single target measure to a set of target measures through {\em{quotient topology}} as follows. 

\begin{definition}[Weak convergence of probability measures to a set]\label{convdef}
    Let $X$ be a compact metric space, equip $\prob(X)$ with the weak-* topology and let $A$ be a non-empty subspace of $\prob(X)$. A sequence of probability measures $(\mu_n)$ in $\prob(X)$ is said to weakly converge to the set $A$ if $([\mu_n])$ converges to $A$ in the quotient space $\prob(X)/A$.
\end{definition}

With everything properly defined, we are now ready to state our convergence result. Note that an adversarial divergence is not necessarily a metric, and therefore does not necessarily induce a topology. However, convergence in an adversarial divergence can still imply some type of topological convergence. More precisely, we show a convergence result that holds for any adversarial divergence $\tau$ as long as the sample space is a compact metric space. Informally, we show that for any target probability measure, if $\tau(\mu^*||\mu_n)$ converges to the global minimum of $\tau(\mu^*||\cdot)$, then $\mu_n$ weakly converges to the set of measures that {\em{achieve}} the global minimum. Formally, 

\begin{theorem}\label{mainthm}
    Let $X$ be a compact metric space, $\tau$ be an adversarial divergence over $X$, $\mu^*\in\prob(X)$, then $\OPT_{\tau, \mu^*} \neq \emptyset$. Let $(\mu_n)$ be a sequence of probability measures in $\prob(X)$. If $\tau(\mu^*||\mu_n)\to \inf_{\mu'}\tau(\mu^*||\mu')$, then $(\mu_n)$ weakly converges to the set $\OPT_{\tau, \mu^*}$.
\end{theorem}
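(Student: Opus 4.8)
The plan is to run a compactness-plus-lower-semicontinuity argument after recasting convergence-to-a-set in the quotient topology as a concrete statement about $(\mu_n)$ eventually entering every neighborhood of $\OPT_{\tau,\mu^*}$. First I would fix the topological setting: since $X$ is a compact metric space, $C(X)$ is separable, so $\prob(X)$, being a weak-* closed subset of the unit ball of $\radon(X)=C(X)^*$, is weak-* compact (Banach--Alaoglu) and metrizable, hence sequentially compact. Next I would show $\nu\mapsto\tau(\mu^*||\nu)$ is weak-* lower semicontinuous: for each $f\in\mathcal{F}\subseteq C_b(X^2)=C(X^2)$, the function $g_f(y):=\bbE_{x\sim\mu^*}[f(x,y)]$ lies in $C(X)$ by uniform continuity of $f$ on the compact set $X^2$, and by Fubini $\bbE_{\mu^*\otimes\nu}[f]=\bbE_\nu[g_f]$, which is weak-* continuous in $\nu$ by the very definition of the weak-* topology; thus $\tau(\mu^*||\cdot)=\sup_{f\in\mathcal{F}}\bbE_\nu[g_f]$ is a supremum of weak-* continuous functions and so is weak-* lower semicontinuous (with values in $\Real\cup\{+\infty\}$). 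Since a lower semicontinuous function on the nonempty compact space $\prob(X)$ attains its infimum, $\OPT_{\tau,\mu^*}\neq\emptyset$; write $A:=\OPT_{\tau,\mu^*}$ and $m:=\inf_{\mu'}\tau(\mu^*||\mu')$, so $m$ is attained.

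Then I would unfold \definitionref{convdef}. The equivalence classes of $\sim_A$ are the set $A$ together with the singletons $\{\nu\}$ for $\nu\notin A$, so every weak-* open $V\subseteq\prob(X)$ with $A\subseteq V$ is saturated; one checks via the quotient map $q$ that the open neighborhoods of the point $[A]$ in $\prob(X)/A$ are exactly the images $q(V)$ of such $V$. Hence $(\mu_n)$ weakly converges to $A$ iff for every weak-* open $V\supseteq A$ we have $\mu_n\in V$ for all sufficiently large $n$. To establish the latter, suppose it fails: there is a weak-* open $V\supseteq A$ and a subsequence with $\mu_{n_k}\in V^c$, which is weak-* closed and hence weak-* compact. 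By sequential compactness a further subsequence $\mu_{n_{k_j}}$ converges weak-* to some $\bar\mu\in V^c$, so $\bar\mu\notin A$; but $\tau(\mu^*||\mu_{n_{k_j}})\to m$, so lower semicontinuity yields $\tau(\mu^*||\bar\mu)\leq\liminf_j\tau(\mu^*||\mu_{n_{k_j}})=m$, forcing $\tau(\mu^*||\bar\mu)=m$ and $\bar\mu\in A$, a contradiction.

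The functional-analytic ingredients (compactness and metrizability of $\prob(X)$, weak-* continuity of $\nu\mapsto\bbE_\nu[g_f]$, lower semicontinuity of a supremum of continuous functions) should be routine. I expect the main obstacle to be the bookkeeping around the quotient topology of \definitionref{convdef}: one has to argue carefully that ``$([\mu_n])$ converges to $A$ in $\prob(X)/A$'' is equivalent to the concrete statement that $\mu_n$ eventually enters every weak-* open set containing $A$, i.e.\ correctly identify the neighborhood filter of $[A]$ with the saturated open supersets of $A$. Once that dictionary is set up, the contradiction argument is short; it also covers the degenerate case $m=+\infty$, where $A=\prob(X)$ and $\prob(X)/A$ is a single point, so there is nothing to prove.
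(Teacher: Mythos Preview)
Your proposal is correct and follows essentially the same line as the paper: weak-* compactness of $\prob(X)$ together with lower semicontinuity of $\tau(\mu^*\|\cdot)$ gives $\OPT_{\tau,\mu^*}\neq\emptyset$, and then a compactness argument on the complement of a neighborhood of $A$ yields the convergence in the quotient. The only cosmetic difference is that you invoke metrizability to extract a convergent subsequence from $V^c$ and use lsc along that subsequence, whereas the paper applies the Weierstrass extreme value theorem directly on the compact set $\mathcal{Q}\setminus\mathcal{N}$ in the quotient to find a strictly larger infimum there (so it never needs sequential compactness or metrizability).
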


As a special case of \theoremref{mainthm}, if $\tau$ is a strict adversarial divergence, i.e., $\OPT_{\tau, \mu^*} = \{\mu^*\}$, then converging to the minimizer of the objective function implies the usual weak convergence to the target probability measure. For example, it can be checked that the objective function of $f$-GAN is a strict adversarial divergence, therefore converging in the objective function of an $f$-GAN implies the usual weak convergence to the target probability measure.

To compare this result with our intuition, we return to the example of a sequence of delta distributions and show that as long as $\tau$ is a strict adversarial divergence, $\tau(\delta_{1}||\delta_{1/n})$ does not converge to the global minimum of $\tau(\delta_1||\cdot)$. Observe that if $\tau(\delta_{1}||\delta_{1/n})$ converges to the global minimum of $\tau(\delta_1||\cdot)$, then according to \theoremref{mainthm}, $\delta_{1/n}$ will weakly converge to $\delta_{1}$, which leads to a contradiction. 

However \theoremref{mainthm} does more than excluding undesired possibilities. It also enables us to give general statements about the structure of the class of adversarial divergences. The structural result can be easily stated under the notion of {\em{relative strength}} between adversarial divergences, which is defined as follows.
\begin{definition}[Relative strength between adversarial divergences]\label{rela}
    Let $\tau_1$ and $\tau_2$ be two adversarial divergences, if for any sequence of probability measures $(\mu_n)$ and any target probability measure $\mu^*$, $\tau_1(\mu^*||\mu_n)\to \inf_{\mu}\tau_1(\mu^*||\mu)$ implies $\tau_2(\mu^*||\mu_n)\to \inf_{\mu}\tau_2(\mu^*||\mu)$, then we say $\tau_1$ is stronger than $\tau_2$ and $\tau_2$ is weaker than $\tau_1$. We say $\tau_1$ is equivalent to $\tau_2$ if $\tau_1$ is both stronger and weaker than $\tau_2$. We say $\tau_1$ is strictly stronger (strictly weaker) than $\tau_2$ if $\tau_1$ is stronger (weaker) than $\tau_2$ but not equivalent. We say $\tau_1$ and $\tau_2$ are not comparable if $\tau_1$ is neither stronger nor weaker than $\tau_2$. 
\end{definition}

Not much is known about the relative strength between different adversarial divergences. If the underlying sample space is nice (e.g., subset of Euclidean space), then the variational (GAN-style) formulation of $f$-divergences using bounded continuous functions coincides with the original definition~\cite{lecnotes}, and therefore $f$-divergences are adversarial divergences. \cite{ArjovskyCB17} showed that the KL-divergence is stronger than the JS-divergence, which is equivalent to the total variation distance, which is strictly stronger than the Wasserstein-1 distance. 

However, the novel fact is that we can reach the weakest strict adversarial divergence. Indeed, one implicatoin of \theoremref{mainthm} is that if $X$ is a compact metric space and $\tau$ is a strict adversarial divergence over $\tau$, then $\tau$-convergence implies the usual weak convergence on probability measures.
In particular, since the Wasserstein distance metrizes weak convergence of probability distributions (see e.g. \cite{oldandnew}), as a direct consequence of \theoremref{mainthm}, the Wasserstein distance is in the equivalence class of the weakest strict adversarial divergences. In the other direction, there exists a trivial strict adversarial divergence 
\begin{align}
    \tau_{\!_{\text{Trivial}}}(\mu||\nu) \bydef \begin{cases}0, &\textnormal{~~if $\mu = \nu$,}\\ +\infty, &\textnormal{~~otherwise},\end{cases}\label{trivial}
\end{align}
that is stronger than any other strict adversarial divergence. We now incorporate our convergence results with some previous results and get the following structural result. 
\begin{corollary}\label{struct}
    The class of strict adversarial divergences over a bounded and closed subset of a Euclidean space has the structure as shown in Figure~\ref{fig}, where $\tau_{\!_{\text{Trivial}}}$ is defined as in \eqref{trivial}, $\tau_{\!_{\text{MMD}}}$ is corresponding to example (c) in \sectionref{generalframework}, $\tau_{\!_{\text{Wasserstein}}}$ is corresponding to example (d) in \sectionref{generalframework}, and $\tau_{\!_{\text{KL}}}$, $\tau_{\!_{\text{Reverse-KL}}}$, $\tau_{\!_{\text{TV}}}$, $\tau_{\!_{\text{JS}}}$, $\tau_{\!_{\text{Hellinger}}}$ are corresponding to example (b) in \sectionref{generalframework} with $f(x)$ being $x\log(x)$, $-\log(x)$, $\frac{1}{2}\abs{x - 1}$, $-(x + 1)\log(\frac{x + 1}{2}) + x\log(x)$, $(\sqrt{x} - 1)^2$, respectively. Each rectangle in Figure~\ref{fig} represents an equivalence class, inside of which are some examples. In particular, $\tau_{\!_{\text{Trivial}}}$ is in the equivalence class of the strongest strict adversarial divergences, while $\tau_{\!_{\text{MMD}}}$ and $\tau_{\!_{\text{Wasserstein}}}$ are in the equivalence class of the weakest strict adversarial divergences.
    \begin{figure}
        \begin{center}
        {\includegraphics[height=3.8cm]{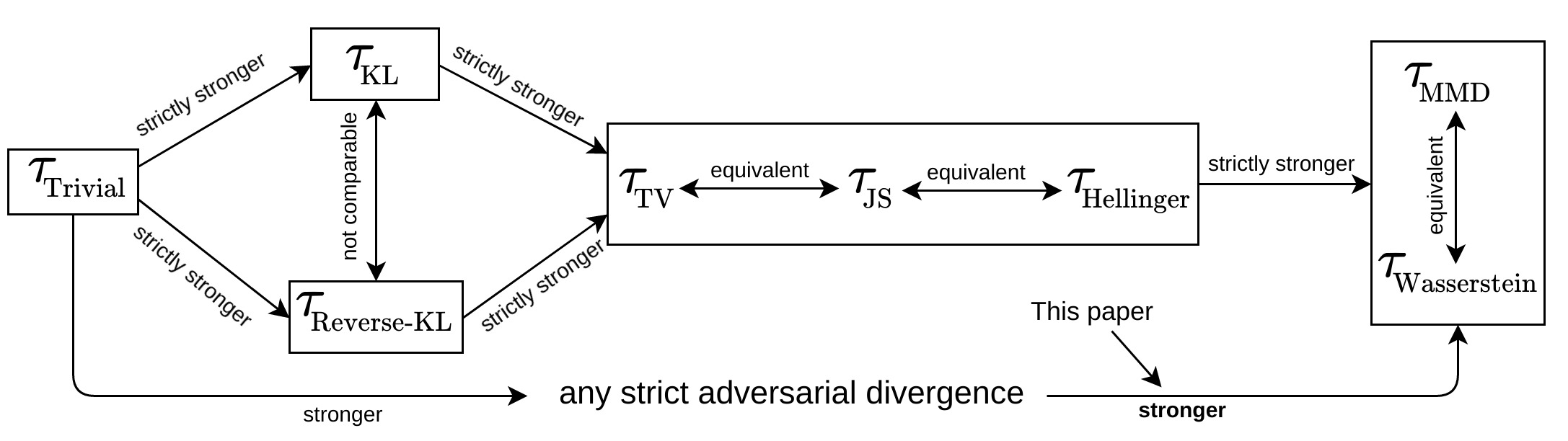}}
        \end{center}
        \caption{Structure of the class of strict adversarial divergences}
        \label{fig}
    \end{figure}
\end{corollary}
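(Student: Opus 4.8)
The plan is to read \corollaryref{struct} as a synthesis of \theoremref{mainthm} with classical comparison inequalities between $f$-divergences and the known metrization properties of the Wasserstein and MMD distances; essentially no new machinery is needed beyond \theoremref{mainthm}. The structural input used repeatedly is that a bounded and closed $X\subseteq\Real^d$ is compact (Heine--Borel), hence a compact metric space, so \theoremref{mainthm} applies to every adversarial divergence over $X$: if $\tau$ is moreover a \emph{strict} adversarial divergence then $\OPT_{\tau,\mu^*}=\{\mu^*\}$, and therefore $\tau(\mu^*||\mu_n)\to\inf_{\mu'}\tau(\mu^*||\mu')$ implies that $\mu_n$ converges weakly to $\mu^*$. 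I would also record at the outset that over a compact Euclidean $X$ the variational (GAN-style) representation of each $f$-divergence with bounded continuous critics coincides with the classical $f$-divergence \cite{lecnotes}, so example (b) with $f$ equal respectively to $x\log x$, $-\log x$, $\tfrac12\abs{x-1}$, $-(x+1)\log\tfrac{x+1}{2}+x\log x$ and $(\sqrt{x}-1)^2$ yields bona fide strict adversarial divergences $\tau_{\!_{\text{KL}}},\tau_{\!_{\text{Reverse-KL}}},\tau_{\!_{\text{TV}}},\tau_{\!_{\text{JS}}},\tau_{\!_{\text{Hellinger}}}$ that equal, up to a positive constant, $D_{\mathrm{KL}}(\mu^*\|\mu)$, the reversed one $D_{\mathrm{KL}}(\mu\|\mu^*)$, the total variation distance, the Jensen--Shannon divergence, and the squared Hellinger distance; verifying the hypotheses of example (b) for each such $f$ (convexity, lower semicontinuity, $f^*(x)\ge x$, existence of a fixed point of $f^*$) is routine.

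\textbf{The extremal classes.} Since the Wasserstein-$1$ distance metrizes weak convergence on a compact metric space \cite{oldandnew}, $\tau_{\!_{\text{Wasserstein}}}(\mu^*||\mu_n)\to 0$ iff $\mu_n\to\mu^*$ weakly; together with the consequence of \theoremref{mainthm} above this shows that every strict adversarial divergence is stronger than $\tau_{\!_{\text{Wasserstein}}}$, so $\tau_{\!_{\text{Wasserstein}}}$ lies in the weakest equivalence class. For $\tau_{\!_{\text{MMD}}}$ with a universal kernel on a compact $X$, the MMD is a metric whose topology is the weak topology (the RKHS is uniformly dense in $C(X)$, and on compact $X$ its unit ball is equicontinuous, so weak convergence is equivalent to MMD-convergence; see \cite{SriperumbudurGFSL10}), hence $\tau_{\!_{\text{MMD}}}\equiv\tau_{\!_{\text{Wasserstein}}}$ and $\tau_{\!_{\text{MMD}}}$ is also in the weakest class. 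At the other extreme, $\tau_{\!_{\text{Trivial}}}$ of \eqref{trivial} is a strict adversarial divergence: with $\mathcal F=\{(x,y)\mapsto n\bigl(v(x)-v(y)\bigr):n\in\mathbb N,\ v\in C_b(X),\ \norm{v}_\infty\le 1\}$ the supremum of $\bbE_{\mu\otimes\nu}[f]$ over $\mathcal F$ is $0$ when $\mu=\nu$ and $+\infty$ otherwise, because $C_b(X)$ separates Borel probability measures on a compact metric space. It is the strongest strict adversarial divergence: $\tau_{\!_{\text{Trivial}}}(\mu^*||\mu_n)\to 0$ forces $\mu_n=\mu^*$ for all large $n$, and every strict adversarial divergence $\tau_2$ has $\tau_2(\mu^*||\mu^*)=\inf_{\mu'}\tau_2(\mu^*||\mu')$ (elements of $\OPT_{\tau_2,\mu^*}$ achieve the infimum), so $\tau_2(\mu^*||\mu_n)\to\inf_{\mu'}\tau_2(\mu^*||\mu')$.

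\textbf{The intermediate layers.} From the results recalled in the text (\cite{ArjovskyCB17}) I would import that $\tau_{\!_{\text{KL}}}$ is stronger than $\tau_{\!_{\text{JS}}}$, that $\tau_{\!_{\text{JS}}}\equiv\tau_{\!_{\text{TV}}}$, and that $\tau_{\!_{\text{TV}}}$ is strictly stronger than $\tau_{\!_{\text{Wasserstein}}}$. The elementary two-sided comparison between Hellinger and total-variation distance gives $\tau_{\!_{\text{Hellinger}}}\equiv\tau_{\!_{\text{TV}}}$; Pinsker's inequality, applied with the appropriate orientation since $\tau_{\!_{\text{Reverse-KL}}}(\mu^*||\mu)=D_{\mathrm{KL}}(\mu\|\mu^*)$, shows that both $\tau_{\!_{\text{KL}}}$ and $\tau_{\!_{\text{Reverse-KL}}}$ are stronger than $\tau_{\!_{\text{TV}}}$. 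Strictness of these separations comes from explicit one-parameter families: for $\mu^*$ the uniform law on $[0,1]$ and $\mu_n$ the uniform law on $[\tfrac1n,1+\tfrac1n]$ one has $\mu_n\to\mu^*$ in total variation (hence also under $\tau_{\!_{\text{Wasserstein}}}$ and $\tau_{\!_{\text{MMD}}}$) while $D_{\mathrm{KL}}(\mu^*\|\mu_n)=D_{\mathrm{KL}}(\mu_n\|\mu^*)=+\infty$ for every $n$, so $\tau_{\!_{\text{TV}}}$ is strictly weaker than both $\tau_{\!_{\text{KL}}}$ and $\tau_{\!_{\text{Reverse-KL}}}$; and any two distinct, mutually absolutely continuous measures show $\tau_{\!_{\text{Trivial}}}$ is strictly stronger than every $f$-divergence above. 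If Figure~\ref{fig} places $\tau_{\!_{\text{KL}}}$ and $\tau_{\!_{\text{Reverse-KL}}}$ in distinct, incomparable classes, one additionally exhibits a family with $D_{\mathrm{KL}}(\mu^*\|\mu_n)\to 0$ but $D_{\mathrm{KL}}(\mu_n\|\mu^*)\not\to 0$ and a symmetric one with the roles swapped (adding, respectively removing, a vanishing singular part of $\mu^*$). Collecting all the relations gives exactly the diagram of Figure~\ref{fig}.

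\textbf{Main obstacle.} The bookkeeping of inequalities is routine; the delicate points are (i) verifying for each listed $f$ both that example (b)'s conditions hold and that its variational form genuinely reproduces the classical $f$-divergence on a compact Euclidean domain, since the semicontinuity and effective-domain technicalities have to be checked case by case, and (ii) constructing the perturbation sequences that make the separations \emph{strict} — and, if needed, the pair of asymmetric families witnessing the incomparability of $\tau_{\!_{\text{KL}}}$ and $\tau_{\!_{\text{Reverse-KL}}}$ — on a sample space $X$ rich enough (e.g. containing an interval) to support them.
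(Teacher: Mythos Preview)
Your proposal is correct and follows essentially the same route as the paper: invoke \theoremref{mainthm} together with the metrization of weak convergence by Wasserstein and MMD for the weakest class, argue that $\tau_{\!_{\text{Trivial}}}$ is strongest because its convergent sequences are eventually constant, and handle the intermediate layers via Pinsker, the two-sided Hellinger--TV comparison, and the results imported from \cite{ArjovskyCB17}, with explicit uniform-distribution counterexamples for strictness. The only substantive difference is the incomparability of $\tau_{\!_{\text{KL}}}$ and $\tau_{\!_{\text{Reverse-KL}}}$: the paper uses the concrete sequences $\mu_n=U(0,1-1/n)$ and $\mu_n=U(0,1+1/n)$ against $\mu^*=U(0,1)$ (support shrinking versus support expanding), which directly force one KL direction to vanish and the other to be $+\infty$; your ``adding/removing a vanishing singular part'' sketch would also work but is left vaguer.
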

\section{Related Work}
There has been an explosion of work on GANs over the past couple of years; however, most of the work has been empirical in nature. A body of literature has looked at designing variants of GANs which use different objective functions. Examples include~\cite{NowozinCT16}, which propose using the f-divergence between the target $\mu$ and the generated distribution $\nu$, and \cite{li2015generative, DziugaiteRG15}, which propose the MMD distance. Inspired by previous work, we identify a family of GAN-style objective functions in full generality and show  general properties of the objective functions in this family.

There has also been some work on comparing different GAN-style objective functions in terms of their convergence properties, either in a GAN-related setting \cite{ArjovskyCB17}, or in a general IPM setting \cite{SriperumbudurGFSL10}. Unlike these results, which look at the relationship between several specific strict adversarial divergences, our results apply to an entire class of GAN-style objective functions and establish their convergence properties. For example, \cite{ArjovskyCB17} shows that KL-divergnce, JS-divergence, total-variation distance are all stronger than the Wasserstein distance, while our results generalize this part of their result and says that any strict adversarial divergence is stronger than the Wasserstein distance and its equivalences. Furthermore, our results also apply to non-strict adversarial divergences.

That being said, it does not mean our results are a complete generalization of the previous convergence results such as \cite{ArjovskyCB17, SriperumbudurGFSL10}. Our results do not provide any methods to compare two strict adversarial divergences if none of them is equivalent to the Wasserstein distance or the trivial divergence. In contrast, \cite{ArjovskyCB17} show that the KL-divergence is stronger than the JS-divergence, which is equivalent to the total variation distance, which is strictly stronger than the Wasserstein-1 distance.

Finally, there has been some additional theoretical literature on understanding GANs, which consider orthogonal aspects of the problem. \cite{Arora0LMZ17} address the question of whether we can achieve generalization bounds when training GANs. \cite{SutherlandTSDRS16} focus on optimizing the estimating power of kernel distances. \cite{DziugaiteRG15} study generalization bounds for MMD-GAN in terms of fat-shattering dimension.

\section{Acknowledgments}
We thank Iliya Tolstikhin, Sylvain Gelly, and Robert Williamson for helpful discussions. The work of KC and SL were partially supported by NSF under IIS 1617157.

\bibliography{arxiv}

\appendix
\section{Proof of \theoremref{th:inclusion}}
We observe that the assumptions of the theorem imply that
for any $\mu\in\prob(X)$,
\begin{equation}
 \tau(\mu||\mu) = \sup_{\theta\in\Theta} \bbE_{\mu \otimes\mu}\left[-r_{\theta}\right] = \bbE_{\mu \otimes\mu}\left[-r_{\tmm}\right]= -c\,.
 \label{equi}
\end{equation}
The assumptions also imply that for any $\mu, \nu\in\prob(X)$,
    \begin{align*}
    \tau(\mu||\nu) &\geq \bbE_{\mu \otimes\nu}\left[m_{\tmn}-r_{\tmn}\right] \\
        & \geq\bbE_{\mu \otimes\nu}\left[-r_{\tmn}\right]\\
        & = -c
    \end{align*}
Fix $\mu^*, \mu\in\prob(X)$ and assume $\mu\in \mathcal{M}_{\mu^*}$, i.e. 
$\bbE_{\mu^*}[v_{\theta}] = \bbE_{\mu}[v_{\theta}]$ for any $\theta\in\Theta$. Then
\begin{align}
    \tau(\mu^*||\mu) &= \sup_{\theta\in\Theta}\bbE_{\mu^*\otimes\mu}\left[m_{\theta} - r_{\theta}\right]\nonumber\\
    &=\sup_{\theta\in\Theta}\bbE_{\mu^*\otimes\mu}\left[- r_{\theta}\right]\nonumber\\
    &=\bbE_{\mu^* \otimes\mu}\left[-r_{\tmsm}\right]\\
    &=-c\,.
\end{align}
Therefore $\mu\in\OPT_{\tau, \mu^*}$.

\section{Proof of \theoremref{kerthm}}
Since by \autoref{th:inclusion} we already have $\mathcal{M}_{\mu^*}\subset \OPT_{\tau, \mu^*}$, we only need to prove for any $\mu^*\in\prob(X)$,
        \begin{align*}
            \prob(X)\setminus\mathcal{M}_{\mu^*} \subseteq \prob(X)\setminus\OPT_{\tau, \mu^*}.
        \end{align*}
        Fix $\mu^*, \mu\in\prob(X)$. Assume there exists $\theta'\in\Theta$ such that $\bbE_{\mu^*\otimes\mu}[m_{\theta'}] \neq 0$. If $\bbE_{\mu^*\otimes\mu}\left[m_{\tmsm}\right]\neq 0$, then we have $\bbE_{\mu^*\otimes\mu}\left[m_{\tmsm}\right]> 0$. Now note that 
        \begin{align}
            \tau(\mu^*||\mu) &= \sup_{\theta\in\Theta}\bbE_{\mu^*\otimes\mu}\left[m_{\theta} - r_{\theta}\right]\nonumber\\
            &\geq\bbE_{\mu^*\otimes\mu}\left[m_{\tmsm} - r_{\tmsm}\right]\nonumber\\
            &=\bbE_{\mu^*\otimes\mu}\left[m_{\tmsm}\right] - \bbE_{\mu^*\otimes\mu}\left[r_{\tmsm}\right]\nonumber\\
            &> - \bbE_{\mu^*\otimes\mu}\left[r_{\tmsm}\right]\nonumber\\
            &= -c\nonumber\\
            &= \tau(\mu^*||\mu^*),\nonumber
        \end{align}
        where the last equality is due to \eqref{equi}. Thus $\mu\not\in \OPT_{\tau, \mu^*}$. For the rest of the proof we assume $\bbE_{\mu^*\otimes\mu}\left[m_{\tmsm}\right]= 0$. Then by \eqref{critical} we have $\nabla_{\tmsm}\bbE_{\mu^*\otimes\mu}\left[m\right] \neq \mathbf{0}$. Also because $\tmsm$ is an interior point of $\Theta$ and $\tmsm$ is a minimizer of $\theta\mapsto\bbE_{\mu^*\otimes\mu}[r_{\theta}]$, by Fermat's stationary points theorem, we have $\nabla_{\tmsm}\bbE_{\mu^*\otimes\mu}\left[r\right] = \mathbf{0}$. Therefore $\nabla_{\tmsm}\bbE_{\mu^*\otimes\mu}\left[m - r\right] \neq \mathbf{0}$. Thus again by Fermat's stationary points theorem there exists a $\theta'\in\Theta$ such that
    \begin{align*}
        \bbE_{\mu^*\otimes\mu}\left[m_{\theta'} - r_{\theta'}\right] &> \bbE_{\mu^*\otimes\mu}\left[m_{\tmsm} - r_{\tmsm}\right]\\
        &\geq \bbE_{\mu^*\otimes\mu}\left[- r_{\tmsm}\right]\\
            &= -c\nonumber\\
        &=\tau(\mu^*||\mu^*),
    \end{align*}
    where the last equality is due to \eqref{equi}. Finally note that
    \begin{align*}
        \tau(\mu^*||\mu) \geq \bbE_{\mu^*\otimes\mu}\left[m_{\theta'} - r_{\theta'}\right] >\tau(\mu^*||\mu^*).           
    \end{align*}
    Therefore $\mu\not\in \OPT_{\tau, \mu^*}$. This concludes the proof.
\section{Proof of \corollaryref{fgan-col}}
Recall that by assumption $f^*(x) \geq x$ for any $x\in\Real$ and $f^*(x_0) = x_0$ for some $x_0\in\interior(\dom f^*)$. Since $f^*$ is continuously differentiable on $\interior(\dom f^*)$ necessarily we have $(f^*)'(x_0) = 1$.

For each $\theta\in\Theta$, let $v_{\theta}(x) = \theta^{\mathsf{T}}(\psi(x), 1)$, $m_{\theta}(x, y) = v_{\theta}(x) - v_{\theta}(y)$, $r_{\theta}(x, y) = f^*(\theta^{\mathsf{T}}(\psi(y), 1)) - \theta^{\mathsf{T}}(\psi(y), 1)\geq 0$, then $\tau(\mu||\nu) = \sup_{\theta\in\Theta}\bbE_{\mu\otimes\nu}[m_{\theta} - r_{\theta}]$. $v_{\theta}$ and $r_{\theta}$ are bounded continuous functions since both $f^*$ and $\psi$ are continuous functions, $\theta^{\mathsf{T}}(\psi(x), 1) \in \dom f^*$ for any $(x, \theta) \in X\times\Theta$, and $X$ is a compact set. Let $\tmn = (0, 0, \cdots, 0, x_0)$, that is, a vector whose last coordinate is $x_0$ and $0$ elsewhere. We have that for any $\mu^*, \mu\in\prob(X)$
\begin{align*}
    \tau(\mu^*||\mu) \geq \bbE_{\mu^*\otimes\mu}\left[m_{\tmsm} - r_{\tmsm}\right] = \bbE_{\mu^*\otimes\mu}[f(x_0) - x_0] = 0
\end{align*}
\begin{align*}
    \tau(\mu^*||\mu^*) = \sup_{\theta\in\Theta}\bbE_{\mu^*\otimes\mu}[- r_{\theta}] \leq 0
\end{align*}
Thus $\OPT_{\tau, \mu^*} = \left\{\mu: \tau(\mu^*||\mu) = 0\right\}\ni\mu^*$. It remains to show $\OPT_{\tau, \mu^*} = \left\{\mu: \bbE_{\mu^*}[\psi] - \bbE_{\mu}[\psi]\right\}$.

Because $x_0$ is an interior point of $\dom f^*$, we have $\tmn$ is an interior point of $\Theta$, due to the compactness of $X$ and all $\psi_i$ being continuous and therefore bounded continuous. Also, it is easy to see that $r_{\tmn}$ is a constant function.

Because $f^*(x)\geq x$ for any $x\in\Real$, we have that $r_{\theta}(x, y) \geq 0$ for any $\theta\in\Theta$ and $x, y\in X$. On the other hand, we have $r_{\tmn}(x, y) = 0$ for any $x, y\in X$. Therefore $r_{\tmn} \leq r_{\theta}$ for any $\theta\in\Theta$. Also, it is easy to see that $\bbE_{\mu\otimes\nu}[m_{\tmn}] = 0$.

Now it suffices to show that $\theta\mapsto \bbE_{\mu\otimes\nu}[m_{\theta}]$ and $\theta\mapsto \bbE_{\mu\otimes\nu}[r_{\theta}]$ both has gradient at $\tmn$ and condition \eqref{critical} in \theoremref{kerthm} hold.

Because both $\psi$ and $f^*$ are continuously differentiable and $X$ is a compact space, according to the Leibniz rule for differentiating an integral in general measurable spaces (see e.g., \cite{AB} ), $\theta\mapsto \bbE_{\mu\otimes\nu}[r_{\theta}]$ has gradient at $\tmn$. Also note
    \begin{align}
        \nabla_{\tmn}\bbE_{\mu\otimes\nu}[m] = \bbE_{x\sim\mu,~y\sim\nu}[(\psi(x) - \psi(y), 0)] = (\bbE_{\mu}[\psi] - \bbE_{\nu}[\psi], 0)\label{der1}
    \end{align}
To verify condition \eqref{critical}, note that if \eqref{der1} is equal to $\mathbf{0}$, then $\bbE_{\mu}[\psi] = \bbE_{\nu}[\psi]$, therefore for any $\theta\in\Theta$, $\bbE_{\mu\otimes\nu}[m_{\theta}] = \theta^{\mathsf{T}}(\bbE_{\mu}[\psi] - \bbE_{\nu}[\psi], 0) = 0$. The proof is concluded.

\section{Proof of \theoremref{mainthm}}
We first need a standard result in functional analysis. A brief proof is provided for completeness.
\begin{lemma}\label{complemma}
    If $X$ is a compact metric space, then $\prob(X)$ is weak-* compact.
\end{lemma}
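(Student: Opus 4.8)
The plan is to exhibit $\prob(X)$ as a weak-* closed subset of the closed unit ball of $\radon(X)$ and then apply the Banach--Alaoglu theorem together with the fact that a closed subset of a compact space is compact.

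First I would recall the duality already invoked in the excerpt: since $X$ is a compact metric space, $C(X)$ under the uniform norm is a Banach space with continuous dual $\radon(X)$. By Banach--Alaoglu, the closed unit ball $B = \{\lambda\in\radon(X): \norm{\lambda}\leq 1\}$ is weak-* compact. Since every probability measure has total variation norm equal to $1$, we have $\prob(X)\subseteq B$, so it remains only to show that $\prob(X)$ is weak-* closed in $\radon(X)$.

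For the closedness, I would use the Riesz-representation characterization of probability measures inside $\radon(X)$: a signed measure $\lambda$ lies in $\prob(X)$ if and only if $\bbE_{\lambda}[f]\geq 0$ for every $f\in C(X)$ with $f\geq 0$, and $\bbE_{\lambda}[\mathbf{1}] = 1$. By the definition of the weak-* topology, each evaluation functional $\lambda\mapsto\bbE_{\lambda}[f]$ is weak-* continuous, so each of the sets $\{\lambda:\bbE_{\lambda}[f]\geq 0\}$ (for $f\geq 0$) and $\{\lambda:\bbE_{\lambda}[\mathbf{1}]=1\}$ is weak-* closed, being the preimage of a closed subset of $\Real$. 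Their intersection, which is exactly $\prob(X)$, is therefore weak-* closed; combined with the previous paragraph this finishes the proof.

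The argument is standard and presents no real obstacle; the only subtlety worth flagging is the role of compactness of $X$. It is used both to guarantee that $\radon(X)$ is the entire dual of $C(X)$ (so that Banach--Alaoglu is applied to the correct space) and, crucially, to ensure that $\mathbf{1}\in C(X)$, so that the normalization condition $\bbE_{\lambda}[\mathbf{1}]=1$ survives passage to weak-* limits and no mass escapes. If one prefers to work with sequences throughout, one may additionally note that $C(X)$ is separable, hence $B$ is weak-* metrizable, but this refinement is not needed for the statement.
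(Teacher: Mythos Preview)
Your proposal is correct and follows essentially the same route as the paper: Banach--Alaoglu on the unit ball of $\radon(X)$, together with the observation that $\prob(X)$ is the intersection of the weak-* closed sets $\{\lambda:\bbE_\lambda[\mathbf{1}]=1\}$ and $\bigcap_{f\ge 0}\{\lambda:\bbE_\lambda[f]\ge 0\}$. The only cosmetic difference is that the paper spells out, via Lusin's theorem, why $\bbE_\lambda[f]\ge 0$ for all nonnegative $f\in C(X)$ forces $\lambda$ to be a positive measure, whereas you simply invoke the Riesz representation characterization; both are valid.
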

\begin{proof}
By the Banach-Alaoglu theorem, the following closed unit ball is weak-* compact.
        \begin{align}
            \left\{\mu\in\radon(X): |\mu|(X) \leq 1\right\}\label{ball}.
        \end{align}
    Since the constant function $\mathbf{1}$ is in $C(X)$.
    The following set is weak-* closed. 
        \begin{align}
            \{\mu\in \radon(X): \mu(X) = 1\} = \{\mu\in\radon(X): \left<\mathbf{1}, \mu\right> = 1\}.\label{plane}
        \end{align}
    We also claim that    
    \begin{align}
        \left\{\mu\in\radon(X): \mu(A) \geq 0 \text{\ for every Borel set in\ } X\right\}
       =
        \bigcap_{f\in C_+(X)} \left\{\mu\in \radon(X): \left<f, \mu\right> \geq 0\right\},\label{set2}
    \end{align}
    which is weak-* closed. To justify the claim, on one hand, the l.h.s. is clearly a subset of the r.h.s.; on the other hand, to show the r.h.s. is also a subset of the l.h.s., consider a $\mu\in\radon(X)$ with a Borel set $A$ such that $\mu(A) < 0$ (i.e., $\mu$ is not in the l.h.s.), then by Lusin's Theorem the measurable function $\mathbf{1}_A$ can be approximated by functions in $C(X)$ in the sense that for any $\epsilon > 0$, there exists a $f_{\epsilon}\in C(X)$ such that
    \[
        \abs{\bbE_{\mu}[\mathbf{1}_A] -  \bbE_{\mu}[f_{\epsilon}]} < \epsilon,
        \]
    Choose $\epsilon = \frac{\abs{\mu(A)}}{2}$ we get a function $f_{\epsilon}' =\max(f_{\epsilon}, \mathbf{0})$ in $C(X)$ such that $\bbE_{\mu}[f_{\epsilon}'] \leq 0$.
    This means $\mu$ is not in the r.h.s. Note that \[\prob(X) = \eqref{ball}\cap\eqref{plane}\cap\eqref{set2}.\] Since the intersection of a compact subset and a closed subset is a compact subset, we conclude that $\prob(X)$ is weak-* compact.
\end{proof}
    Now we can start the main proof. We equip $\prob(X)$ with the weak-* topology. Let $\mu^*\in\prob(X)$. Note that the function $\tau(\mu^*||\cdot)$ is the supremum of a family of affine continuous functions on $\prob(X)$, therefore $\tau(\mu^*||\cdot)$ is lower semi-continuous on $\prob(X)$. Note that by \lemmaref{complemma}, $\prob(X)$ is compact. Therefore by Weierstrass extreme value theorem, $\tau(\mu^*||\cdot)$ attains its minimual value on $\prob(X)$, therefore $\OPT_{\tau, \mu^*}\neq \emptyset$.
   
   Let $(\mu_n)$ be a sequence in $\prob(X)$. Assume $\tau(\mu^*||\mu_n)\to \inf_{\mu'}\tau(\mu^*||\mu')$, we need to show that in the quotient space $\mathcal{Q} = \prob(X) / \OPT_{\tau, \mu^*}$, $([\mu_n])$ converges to $\OPT_{\tau, \mu^*}$ . Let $\mathcal{N}$ be any open neighbourhood of $\OPT_{\tau, \mu^*}$ in $\mathcal{Q}$. We need to show that $([\mu_n])$ is eventually in $\mathcal{N}$.

    First we show that $\mathcal{Q} \setminus \mathcal{N}$ is compact. By \lemmaref{complemma}, $\prob(X)$ is compact. Since $\mathcal{Q}$ is a quotient space of $\prob(X)$, $\mathcal{Q}$ is compact. Observe that $\mathcal{Q}\setminus\mathcal{N}$ is a closed subset of $\mathcal{Q}$, therefore $\mathcal{Q}\setminus\mathcal{N}$ is compact.

    Recall that $\tau(\mu^*||\cdot)$ is lower semi-continuous on $\prob(X)$. Now observe that $\tau(\mu^*||\cdot)$ is also a function on $\mathcal{Q}$, and since $\mathcal{Q}$ is a quotient space of $\prob(X)$, $\tau(\mu^*||\cdot)$ is also lower semi-continuous on $\mathcal{Q}$.  By Weierstrass extreme value theorem, there exists $[\mu']\in \mathcal{Q} \setminus \mathcal{N}$ such that
    \begin{align*}
        \tau(\mu^*||[\mu']) = \inf_{[\mu]\in \mathcal{Q} \setminus \mathcal{N}}\tau(\mu^*||[\mu])
    \end{align*}
    Since $\OPT_{\tau, \mu^*}\not\in \mathcal{Q}\setminus\mathcal{N}$, we have $[\mu']\neq [\OPT_{\tau, \mu^*}]$. Therefore $\tau(\mu^*||[\mu']) >  \inf_{\mu}\tau(\mu^*||\mu)$. Recall that $\tau(\mu^*||[\mu_n])\to \inf_{\mu}\tau(\mu^*||\mu)$, $\tau(\mu^*||[\mu_n])$ will be eventually less than $\tau(\mu^*||[\mu'])$. This means $([\mu_n])$ will eventually be in $\mathcal{N}$.
    \section{Proof of \corollaryref{struct}}
    It is known that for nice spaces (e.g., bounded and closed subset of a Euclidean space), the variational (GAN-style) formulation of $f$-divergences using bounded continuous functions is equivalent to the original definition~\cite{lecnotes}. Therefore we them interchangeably. \cite{ArjovskyCB17} already showed that total variation is equivalent to JS divergence and they are not equivalent to the Wasserstein distance. These two are also known to be equivalent to the squared Hellinger distance by noticing that
    \begin{align*}
        \tau_{\!_{\text{Hellinger}}}(\mu||\nu) \leq \tau_{\!_{\text{TV}}}(\mu||\nu) \leq \sqrt{2\tau_{\!_{\text{Hellinger}}}(\mu||\nu)}.
    \end{align*}

    Both KL and Reverse-KL divergence are stronger than total variation by Pinsker's inequality. They are in fact strictly stronger than total variation. Let $\mu^* = U(0, 1)$, $\mu_n = U(1/n, 1 + 1/n)$, where $U(a, b)$ is the uniform distribution on $(a, b)$. Note $\tau_{\st{KL}}(\mu^*||\mu_n) = \tau_{\st{Reverse-KL}}(\mu^*||\mu_n) = +\infty$ for any $n$ while $\tau_{\st{TV}}(\mu^*||\mu_n)\to 0$. We can also show they are not comparable with each other by considering $\mu_n = U(0, 1 - 1/n)$ and $\mu_n = U(0, 1 + 1/n)$ while $\mu^*$ is still $U(0, 1)$. The same examples also show they are strictly weaker than the trivial divergence.

    It is also known that $\tau_{\st{Wasserstein}}$ and $\tau_{\st{MMD}}$ metrize the weak-* topology of $\prob(X)$ if $X$ is a compact metric space (see, e.g., \cite{SriperumbudurGFSL10}), therefore by \theoremref{mainthm} they are in the equivalence class of the weakest strict adversarial divergences.
    
    It remains to show the trivial divergence is stronger than any strict adversarial divergence. Any sequence $\mu_n$ converging to $\mu^*$ under the trivial divergence is eventually $\mu^*$, therefore trivially converges under any other strict adversarial divergence.
    
\end{document}